\newtheorem*{theorem}{\bf Theorem}
\newtheorem*{lemma}{\bf Lemma}
\newtheorem*{corollary}{\bf Corollary}
\theoremstyle{definition}
\newtheorem{definition}{\bf Definition}[section]
\newtheorem{example}{\bf Example}[section]
\newtheorem*{conjecture}{\bf Conjecture}
\newtheorem{OpenProblem}{\bf Open Problem}
\newcommand{\ReLU}{\textsc{ReLU}}
\newcommand{\argmin}{\mathop{\mathrm{argmin}}} 
\begin{document}

\usetikzlibrary{arrows,shapes,circuits.logic.US,shapes.gates.logic.IEC,calc,decorations.markings,patterns}
\tikzstyle{branch}=[fill,shape=circle,minimum size=3pt,inner sep=0pt]

\tikzset{
block/.style = {draw, fill=white, rectangle, minimum height=3em, minimum width=3em},
tmp/.style  = {coordinate}, 
sum/.style= {draw, fill=white, circle, node distance=1cm},
input/.style = {coordinate},
output/.style= {coordinate},
pinstyle/.style = {pin edge={to-,thin,black}
}
}

\title{Rethinking Arithmetic for Deep Neural Networks}

\author{
G. A. Constantinides {\tt (g.constantinides@imperial.ac.uk)}}

\date{September 2019}

\maketitle

\begin{abstract}  
We consider efficiency in the implementation of deep neural networks. Hardware accelerators are gaining interest as machine learning becomes one of the drivers of high-performance computing. In these accelerators, the directed graph describing a neural network can be implemented as a directed 
graph describing a Boolean circuit. We make 
this observation precise,  leading naturally to an understanding of practical neural networks as discrete functions, and show that so-called {\em binarised neural networks} are functionally complete. 
In general, our results suggest that it is valuable to consider {\em Boolean circuits as neural networks}, leading to 
the question of which circuit topologies are promising. We argue that continuity is central to generalisation in learning, explore the interaction between data coding, network topology, and node functionality for continuity, and pose some open questions for future research. 
As a first step to bridging the gap between continuous and Boolean views of neural network accelerators, we present some recent results from our work on LUTNet, a novel Field-Programmable Gate Array inference approach. 
Finally, we conclude with additional possible fruitful avenues for research bridging the continuous and discrete views of neural networks.
\end{abstract}


\section*{Notation}
${\mathbb R}$ denotes the reals, and ${\mathbb B} = \{\bot, \top\}$ the set of Boolean truth values, where $\bot$ denotes false and $\top$ denotes true. $\ReLU : {\mathbb R} \to {\mathbb R}$ is used to denote the {\em rectified linear unit} function $x \mapsto \max(0, x)$. $\sigma : {\mathbb R} \to {\mathbb R}$ denotes the {\em sigmoid} function $x \mapsto \frac{2}{1+\exp(-x)} - 1$.
We denote function composition by $\circ$.  ${\mathcal B}_K$ denotes the set of all functions from ${\mathbb B}^K$ to ${\mathbb B}$.
The set of integers is denoted by ${\mathbb Z}$, and the set of integers bounded in absolute value $n$ by ${\mathbb Z}_n = \{ i \in {\mathbb Z} | -n \leq i \leq n \}$. The following Boolean connectives are used: $\neg$ denotes negation, $\wedge$ denotes conjunction, $\vee$ denotes disjunction, and $\oplus$ denotes exclusive or (XOR).

\section{Introduction}
\label{introduction}

This paper considers the development of deep neural networks in the {\em supervised learning} setting~\cite{Goodfellow-et-al-2016}. Inspired by the recent rise of interest in specialised hardware accelerators for deep neural networks~\cite{Erwei2019}, we shall take a fresh look at the question of suitable
network topologies and basic node functionalities for such accelerators. 

We shall begin by defining the supervised learning problem. Let ${\mathbb X}$ denote the set of possible inputs to a machine learning inference function, and ${\mathbb Y}$ denote the set of possible outputs. Imagine that we have an oracle function $r: {\mathbb X} \to {\mathbb Y}$, mapping 
every possible input to the corresponding {\em ideal output} $y = r(x)$. Generally, we will be interested in inference via a family of parametrically-defined functions $f(p;x)$, with parameters drawn from some set ${\mathbb P}$. We will 
often write $f_p(x)$ when we wish to consider the case where the parameter value $p$
has been fixed. These functions will not, in general, produce the ideal output for all possible inputs, 
and therefore we need to consider some notion of inaccuracy, or `loss', $\ell$, which
measures the difference between the ideal output and the actually computed output as $\ell( f_p(x), r(x) )$. For simplicity, we assume in this article that $\ell$ is a {\em metric}~\cite{Searcoid2007} defined on ${\mathbb Y}$. We are generally interested in average-case behaviour of these parametric functions `in the wild', 
on any data that may frequently appear as input in real usage. Lifting the metric on ${\mathbb Y}$ to the following metric defined on functions ${\mathbb X} \to {\mathbb Y}$,

\begin{equation}
\label{idealmetric}
m( f, f' ) = {\mathbb E} \left\{ \ell( f, f' ) \right\},
\end{equation}

\noindent where the expectation is over the input space, we can then pose the question of supervised training as the following optimisation problem of selecting parameters to minimize distance to an oracle function:

\begin{equation}
\argmin\limits_{p \in {\mathbb P}} m(f_p, r)
\end{equation}

There are some practical problems, however. Firstly, it is unlikely that we have access to or knowledge of the distribution of ${\mathbb X}$ or to an oracle function $r$, except through a finite set of samples, known as the {\em training set}. Secondly, as Scheinberg notes~\cite{Scheinberg2016},
the loss function $\ell$ desired in practice ({\em e.g.} an indicator function) may give rise to a computationally intractable optimisation problem. As a result, it is common to aim instead to solve the training problem,

\begin{equation}
p^* = \argmin\limits_{p \in {\mathbb P}} \frac{1}{n} \sum_{i=1}^n {\ell'( f_p(x_i), y_i )}
\end{equation}

\noindent where $(x_i,y_i)$ are the {\em training data} -- inputs for which the ideal output is known -- and $\ell'$ is some suitable, often convex, loss function.

The actual accuracy of the resulting function $f_{p^*}$, can then be evaluated on some other set of data $(x'_i,y'_i)$-- the {\em  test data}, as a proxy for $m( f_{p^*}, r )$, to obtain the {\em test error}:

\begin{equation}
\frac{1}{n'} \sum_{i=1}^{n'} {\ell( f_{p^*}(x'_i), y'_i )}
\end{equation}

It turns out that this setting therefore imposes particular restrictions on the family of parameterised functions $f$, because we wish $p^*$ -- which was selected based only on the training data -- to also work well for the test data, as well as ensuring several other properties to be discussed. This fundamental problem:
the design of families of parameterised functions for this purpose, is the key subject of study of this paper. In particular, we address here the case where the functions $f_p$ map from one finite set to another, which is always the practical setting in a finite-precision computer. By considering
the discrete problem explicitly, several new insights are developed, which may be of value to those researching highly-efficient machine inference. 


The structure of this paper is as follows:


Section~\ref{DeepLearning} introduces a model of computation defined by typed graphs. We use this model to develop a deeper understanding of the computation of inference functions in deep neural networks, discussing suitable choices for such functions. The model also lets us reason about
their approximation by discrete functions, and hence the potential for hardware implementations of such computations. 
In Section~\ref{DiscreteFunctions}, we present an abstract view of the typical digital design process for hardware accelerators of numerical functions. 
We then show that a known family of extremely quantised neural networks is {\em functionally complete}. This result runs counter to standard thinking in hardware-accelerated neural networks, and we shall consider the reason for this apparent contradiction.
Section~\ref{WhichFunctions} revisits the question of appropriate inference functions from Section~\ref{DeepLearning}, but now in the discrete setting. We argue for a trinity of topology, node functionality and metrics as interacting to determine efficient inference computation, and pose some open questions
regarding the extent to which these factors can be decoupled. 
In Section~\ref{TwoApproaches}, we consider an approach for efficient FPGA inference, known as LUTNet, recently published by my research group as an example of initial work bridging the continuous and discrete setting. 
Finally, Section~\ref{Conclusions} draws conclusions and points to several fruitful avenues for further research.


\section{Networks and Inference Functions}
\label{DeepLearning}

\noindent {\bf A Graphical Approach}

A graphical approach is universally used to describe -- formally or informally -- the computations performed by deep neural networks. In this section we shall develop a slightly unorthodox but very general formalism, which will be of use throughout the paper. Our aim here is to distinguish the {\em syntactic} 
description of neural networks as graphs from the {\em semantic} interpretation as functions. This distinction will be important because the transformations applied to develop a realisation of a neural network as a program or a piece of digital hardware are primarily based on the syntactic representation.

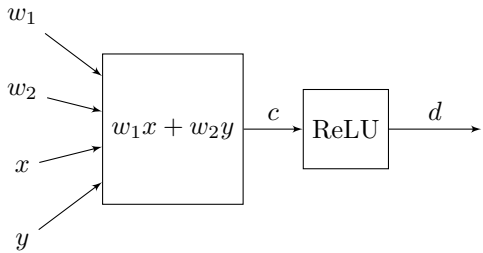
\begin{figure}
\begin{center}
\begin{tikzpicture}[auto, scale=2.0, node distance=2cm,>=latex']
    \node [name=x] (x) at (1,-1) {$x$};
    \node [name=y] (y) at (1,-1.5) {$y$};
    \node [name=w1] (w1) at (1,0) {$w_1$};
    \node [name=w2] at (1,-0.5) (w2) {$w_2$};
    \node [block, minimum height=2cm] (dotprod) at (2,-0.75) {$w_1 x + w_2 y$};
    \node [block, right of=dotprod, node distance=2.3cm] (relu){$\text{ReLU}$};
    \node [output, right of=relu, node distance=1.8cm] (d) {};
    \draw [->] (w1) -- (dotprod);
    \draw [->] (w2) --  (dotprod);
    \draw [->] (x) -- (dotprod);
    \draw [->] (y) -- (dotprod);
    \draw [->] (dotprod) --node{$c$} (relu);
    \draw [->] (relu) --node{$d$} (d);
 \end{tikzpicture}
 \end{center}
 \caption{A simple network consisting of two vertices. One vertex has two parameters and two activation inputs and has function $(w_1, w_2; x, y) \mapsto w_1 x + w_2 y$. The other vertex has no parameters and one activation input and has function $c \mapsto \ReLU(c)$.\label{network}}
\end{figure}

\begin{definition}
An {\bf edge} $e$ is simply a unique label, together with a set such as ${\mathbb R}$ or ${\mathbb F}$, which can be interpreted as the type of data carried by the edge in a network.
\end{definition}

\begin{example}
In Fig.~\ref{network}, $x$, $y$, $w_1$, $w_2$, $c$ and $d$ are all edges, which we will take to be of real type.
\end{example}

\begin{definition}
A {\bf vertex} $v$ is a tuple $v = (\textsc{param},\textsc{in},\textsc{out},\textsc{func})$ of ordered lists of edges $\textsc{param}$, $\textsc{in}$ and $\textsc{out}$, together with a function $\textsc{func}$ from the Cartesian product of the sets defined by $\textsc{param}$ and $\textsc{in}$ to that of those defined by $\textsc{out}$.
\end{definition}

\begin{example}
In Fig.~\ref{network}, there is a vertex $( (w_1, w_2), (x, y), c, ( w_1, w_2; x, y ) \mapsto w_1x + w_2y)$. 
\end{example}

The purpose of distinguishing $\textsc{param}$ from $\textsc{in}$ is to identify those values (parameters) that are intended to be determined once, offline, versus those values (activations) that are intended to be 
change each time the graph is used in inference -- we use a semicolon to separate parameters from activations, for readability purposes. 
In this example, $w_1$ and $w_2$ are parameters, commonly called {\em weights}, while $x$ and $y$ are not. There is one other vertex $( (), c, d,\ReLU )$ shown in the figure; this vertex has an empty parameter list.

\begin{definition}
A {\bf network} $N$ is a pair $N = (V,\textsc{priout})$ of vertices together with a distinguished set of edges $\textsc{priout}$ such that:
\begin{itemize}
\item No edge appearing in the $\textsc{param}$ list of any vertex also appears in the $\textsc{out}$ list of any vertex. 
\item No edge appears in the $\textsc{out}$ list of more than one vertex.
\item All edges in $\textsc{priout}$ appear in the list $\textsc{out}$ list of exactly one vertex.
\end{itemize}

We will refer to {\em parameters of the network} to mean the list of all edges appearing in the $\textsc{param}$ list of any vertex;  {\em inputs to the network} to mean the list of all edges appearing in the $\textsc{in}$ list of some vertex but not in the $\textsc{out}$ list of any vertex; and  
{\em outputs of the network} to be the set $\textsc{priout}$. We will often refer to the `leaf functions' of a network, meaning the collection of functions $\textsc{func}$ of all the vertices in the network.

\end{definition}

\begin{example}
The network shown in Fig.~\ref{network} consists of the two vertices previously described, together with a set of primary outputs. One possibility for such a set is $\textsc{priout} = \{d\}$, but there are other choices, depending on which edges are required to be observable at the network output. The parameters
of this network are $w_1$ and $w_2$, and the inputs to the network are $x$ and $y$.
\end{example}

\begin{definition}
We say that a network $N$ {\bf implements} a function $\llbracket N \rrbracket$ defined through the natural function composition of the individual vertex functions, {\em i.e.} $\llbracket N \rrbracket$ is a function from the Cartesian products of parameters and inputs of the network to the Cartesian product of the outputs of the network,
defined inductively, with vertex functions as the leaf functions.
\end{definition}

\begin{example}
For the network $N$ shown in Fig.~\ref{network}, $\llbracket N \rrbracket = (w_1,w_2; x,y) \mapsto \ReLU(w_1 x + w_2 y)$.
\end{example}


For simplicity, we will consider computations corresponding to acyclic networks -- including the very significant class of Convolutional Neural Networks~\cite{LeCun1989} -- however the formalism can easily be extended to cyclic networks (e.g. LSTMs~\cite{doi:10.1162/neco.1997.9.8.1735}) by lifting computation over the types illustrated above to computations over {\em streams} of those types~\cite{Kahn1974}. This generalisation does not affect the following material. Equally, it is trivial to make networks hierarchical by generalising functions computed to also allow sub-networks, but this will not be required in the sequel.


\vspace{1cm}

\noindent {\bf Functions for Inference}

What kind of functions $f = \llbracket N \rrbracket$ form good candidates for machine learning? And what basic functionality should be implemented by nodes in a network $N$ for this purpose? In practical terms, for deep learning today, the most common leaf functions are inner products, $\ReLU$, sigmoid, and softmax~\cite{Goodfellow-et-al-2016}. However, it is worth considering the various factors that determine this choice now and in the future. Informally, functions should:

\begin{itemize}
\setlength\itemsep{0.2em}
\item[\ding{182}] Generalise well: once the parameter $p$ is selected based on training data, $f_p(x)$ should also tend to perform well over unseen test data.
\item[\ding{183}] Be cheap to compute: the cost (speed, energy) of evaluating the function at inference time should be low.
\item[\ding{184}] Be sufficiently general / expressive: the functions should be capable of approximating a wide variety of oracle functions $r$.
\item[\ding{185}] Be easy to learn: optimisation algorithms used to address the training problem described in Section~\ref{introduction} should be both cheap to execute and also rarely give rise to values of parameter that are grossly suboptimal with respect to the training set.
\end{itemize}

Strang~\cite{Strang2018} argues that continuous piecewise linear (CPL) functions have tended to perform well, explaining the importance of inner product and $\ReLU$ functions in today's networks, as CPL functions are precisely those that are implemented by networks with these vertices. 
Strang argues that {\em continuity} is key to generalisation, which intuitively makes sense: if an untrained input is very close to a trained one, it seems reasonable to expect the corresponding outputs of the network to be very close in turn. 

To make this intuition precise requires us to equip the input and output sets with {\em metrics}, $d$ and $e$, respectively, allowing us to define what it means for inputs and outputs to be `close'. We can then consider the inference function $f_p$ as a function from an input metric space $({\mathbb X},d)$ to an output metric space 
$({\mathbb Y},e)$. We have a choice of options to define continuity; we shall use Lipschitz continuity~\cite{Searcoid2007}, for reasons that will become apparent in the next section. 

\begin{definition}
Suppose $f : {\mathbb X} \to {\mathbb Y}$, where ${\mathbb X}$ is equipped with a metric $d$ and ${\mathbb Y}$ is equipped with a metric $e$. Let $k \in {\mathbb R}$. The function $f$ is $k$-{\bf Lipschitz} if for all $a,b \in {\mathbb X}$, $e( f(a), f(b) ) \leq k d(a,b)$.
\end{definition}

\begin{definition}
A function $f$ is {\bf Lipschitz} if it is $k$-Lipschitz for some $k$.
\end{definition}

\begin{example}
For computation over ${\mathbb R}^n$ with metrics determined by a suitable norm in that space, the $\ReLU$ function is Lipschitz and inner products are Lipschitz,
and thus by composition, networks constructed from these two functions are Lipschitz~\cite{Searcoid2007} and therefore good candidates for generalising beyond training data.
\end{example}

Whether inner products and $\ReLU$ functions are cheap to compute (Property~\ding{183}) depends upon our model of computation; in the abstract Blum-Shub-Smale model for real computation, this is certainly the case~\cite{Blum1989}. It is now well-known that a wide variety of neural networks, including those implementing
CPL functions are {\em universal approximators}, and hence sufficiently general~\cite{Hornik1991,Neal1994} (Property~\ding{184}). This leaves the question of whether such functions are `easy to learn' (Property~\ding{185}). This is still an active area of research,
however theoretical insights such as~\cite{DBLP:journals/corr/abs-1810-02054} combined with practical experience suggest that this is indeed the case.

So while CPL functions over the reals appear to be very promising, practical computers do not compute over the reals. In practice, finite precision datatypes are (almost) always used to approximate computation over the reals, and the picture of appropriate inference functions has the potential to change considerably in this 
setting. We examine this question in this next section.

\section{Discrete Inference}
\label{DiscreteFunctions}

We shall refer to a network where the types of all activations are ${\mathbb R}$ as a {\em real network}, where the types of all activations are ${\mathbb F} \subset {\mathbb R}$ for finite ${\mathbb F}$ as a {\em finite-precision network}, and where the types of all activations are ${\mathbb B}$ as a {\em Boolean network}.
Boolean networks correspond exactly to combinational digital circuits, and so hold a special place from an implementation perspective.

Figure~\ref{digital-design} illustrates the standard digital design process for development of a Boolean network approximating a given real network $G_1$. The first step is that of {\em quantisation}. Here, real data types associated with edges in $G_1$ are replaced by finite precision data types ${\mathbb F}$.
Typical examples are single-precision IEEE floating point arithmetic~\cite{IEEE-standard} as well as various fixed-point arithmetics. Consequently, the functions $\textsc{func}$ performed by each node in the network must also be quantised, hence it is common to require $G_1$'s node functions to be drawn from
a basic set of operators for which this function quantisation process can be performed automatically or is defined by some standard as, {\em e.g.}~$\{ *, - , +, / \}$ are for IEEE floating-point arithmetic. The quantisation process induces a change in function: 
$\iota^{m}_1 \circ \llbracket G_2 \rrbracket \neq \llbracket G_1 \rrbracket \circ \iota^{n}_1$ in general,
and so has been the subject of a considerable amount of work in the DNN literature, with modern machine inference architectures often offering choices of precision that trade performance for accuracy of computation~\cite{Erwei2019},
{\em e.g.}~\cite{IMGNNA}. The main distinguishing features of this setting compared to classical finite precision quantisation results~\cite{Higham2002} are due to the metric $m$ introduced in Section~\ref{introduction}: both its inherently stochastic nature
and distance to an oracle $r$ rather than distance to the underlying real function being the primary concern, {\em i.e.} the ideal quantisation is one that by selecting $\tilde{p}$ minimises $m( \llbracket G_2 \rrbracket_{\tilde{p}}, r)$ rather than $m( \llbracket G_2 \rrbracket_{\tilde{p}}, \llbracket G_1 \rrbracket_p )$. In practice, however,
it is typical to initially select the each element of the quantised parameter independently, effectively relying on repeated application of the triangle inequality applied syntactically to the graph to ensure $m( \llbracket G_2 \rrbracket_{\tilde{p}}, \llbracket G_1 \rrbracket_p )$ remains small, further relying on the triangle inequality property of $m$
to ensure the distance to the oracle does not grow considerably. Sometimes this initial choice is refined through a process known as re-training~\cite{Erwei2019}.

\begin{figure}
\begin{center}
\adjustbox{scale=1.5,center}{
\begin{tikzcd}
{{\mathbb R}^n} \arrow[r,"\llbracket G_1 \rrbracket"] & {{\mathbb R}^m}  \\
{{\mathbb F}^n} \arrow{r}[name=A]{\llbracket G_2 \rrbracket} \arrow[hookrightarrow]{u}{\iota^{n}_1} & {{\mathbb F}^m} \arrow[hookrightarrow]{u}[swap]{\iota^m_1} \\
{\mathbb X}  \arrow{r}[name=B,swap]{\left. \llbracket G_3 \rrbracket \right|_{\mathbb X}} \arrow[u,"\simeq"]{u}[swap]{\phi_i} \arrow[hookrightarrow]{d}[swap]{\iota^{kn}_2} & {\mathbb Y} \arrow[u,"\simeq"]{u}[swap]{\phi_o} \arrow[hookrightarrow]{d}{\iota^{km}_2} \\
{{\mathbb B}^{kn}} \arrow{r}[name=C,swap]{\llbracket G_3 \rrbracket} & {{\mathbb B}^{km}}
\arrow[to path={(A) node[midway,scale=1.5] {$\circlearrowleft$}  (B)}]{}
\end{tikzcd}
}
\end{center}
\caption{\label{digital-design}An abstract view of a typical digital design process. Inclusion maps are indicated by $\hookrightarrow$ and isomorphisms by $\simeq$. Starting from a specification graph $G_1$, the designer constructs a network $G_2$ operating on finite-precision datatypes, typically
fixed or floating point, as described in the text. A `synthesis tool' then automatically creates a Boolean network $G_3$, known as a `netlist'. The netlist implements
the function $\llbracket G_2 \rrbracket$ in the sense that $\llbracket G_3 \rrbracket \circ \iota^{kn}_2 \circ \phi_i^{-1} = \iota^{km}_2 \circ \phi_o^{-1} \circ \llbracket G_2 \rrbracket$. Here we distinguish ${\mathbb X}$ and ${\mathbb Y}$ from ${\mathbb B}^{kn}$ and 
${\mathbb B}^{km}$ because the inclusions are often not surjective, giving rise to the well-studied problem of `Boolean don't-cares'~\cite{Brayton1990}. The lower two sections of this diagram therefore commute, while the top section `approximately commutes'.}
\end{figure}

The second step of the process is to convert the finite-precision network to a Boolean network for implementation. This process is fully automated in modern digital design tools. Firstly, each vertex in the finite-precision graph is replaced by a Boolean network defined for that particular node's function, for a pre-defined
encoding of the elements of ${\mathbb F}$ into elements of ${\mathbb B}^k$, {\em e.g.} the IEEE floating-point storage standard~\cite{IEEE-standard} which encodes each single-precision floating point number as a $k=32$-bit vector of Boolean values; this part of the process is known in digital design as `core generation'. 
Secondly, logic synthesis tools~\cite{Brayton1990} are applied to rewrite the graph to reduce its implementation cost as a circuit. The result of this process is
a Boolean network $G_3$ which can be directly implemented as a digital logic circuit. The computation implemented by $G_3$ corresponds {\em exactly} to that implemented by $G_2$ in the sense that $\phi_o \circ \left. \llbracket G_3 \rrbracket \right|_{\mathbb X} = \llbracket G_2 \rrbracket \circ \phi_i$, where $|_{{\mathbb X}}$ denotes the restriction
of the function to the domain of $\phi_i$, {\em i.e.} the middle section of the diagram commutes. 

It can therefore be seen that in a standard digital design process, the only part of the process where an approximation is induced (the upper section of Fig.~\ref{digital-design}) is not associated with topological changes to the network, while the only part of the process where topological changes are induced (the middle section of
Fig.~\ref{digital-design}) is not associated with approximation. This observation will be of importance in the sequel.



The abstract process described in Fig.~\ref{digital-design} is illustrated for a concrete example in Figure~\ref{tech-map}. The small inset figure corresponds to the topology of $G_1$, the original specification graph, where each vertex is associated with a function ${\mathbb R}^2 \times {\mathbb R}^2 \to {\mathbb R}$ given by $(w_1, w_2; x_1, x_2) \mapsto \ReLU(w_1 x_1 + w_2 x_2)$. Fixing $w_1$ and $w_2$ to specific values, quantising the computation to a 4-bit fixed-point arithmetic, and synthesising the result produces the large main figure, corresponding to $G_3$, where each vertex is associated with a 1- or 2-input Boolean function. Clearly there are
some key differences between these networks apart from their datatypes: $G_3$ has an irregular structure compared to $G_1$ and has clusters of tightly interconnected `neighbourhoods', roughly corresponding to the Boolean networks introduced for each fixed-point arithmetic function in $G_2$. However, by maintaining the
entire design process within the same graph formalism, we can also exploit the similarities: both are directed graphs operating on typed data, with nodes which can be considered as parametric functions - for the real network the parametric functions are dot products with parameters given by weights, for the Boolean network they are 
Boolean functions with the parameter indicating which function from ${\mathcal B}_1$ or ${\mathcal B}_2$ has been selected by the logic synthesis tool.

\begin{figure}
\begin{center}
\def\stackalignment{l}
\bottominset{\fbox{\includegraphics[width=0.12\textwidth]{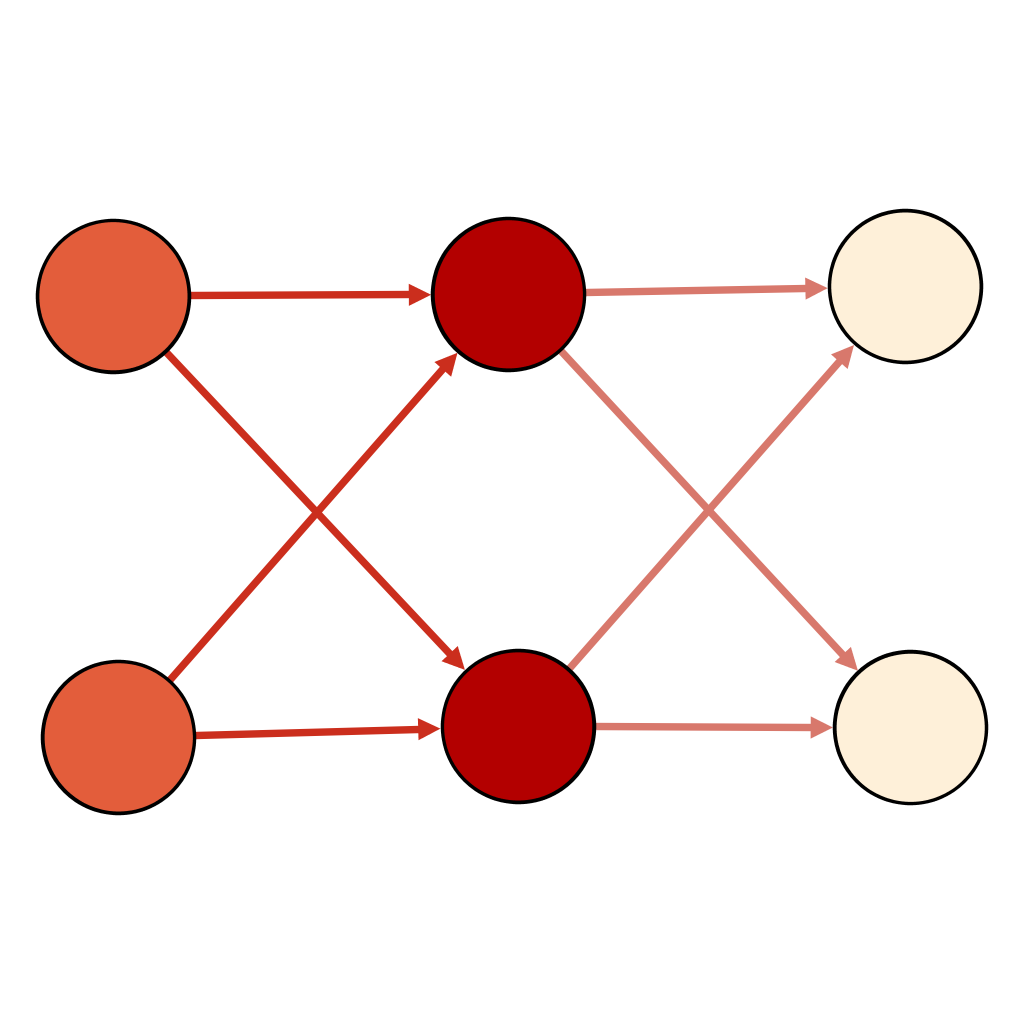}}}{\includegraphics[width=0.6\textwidth]{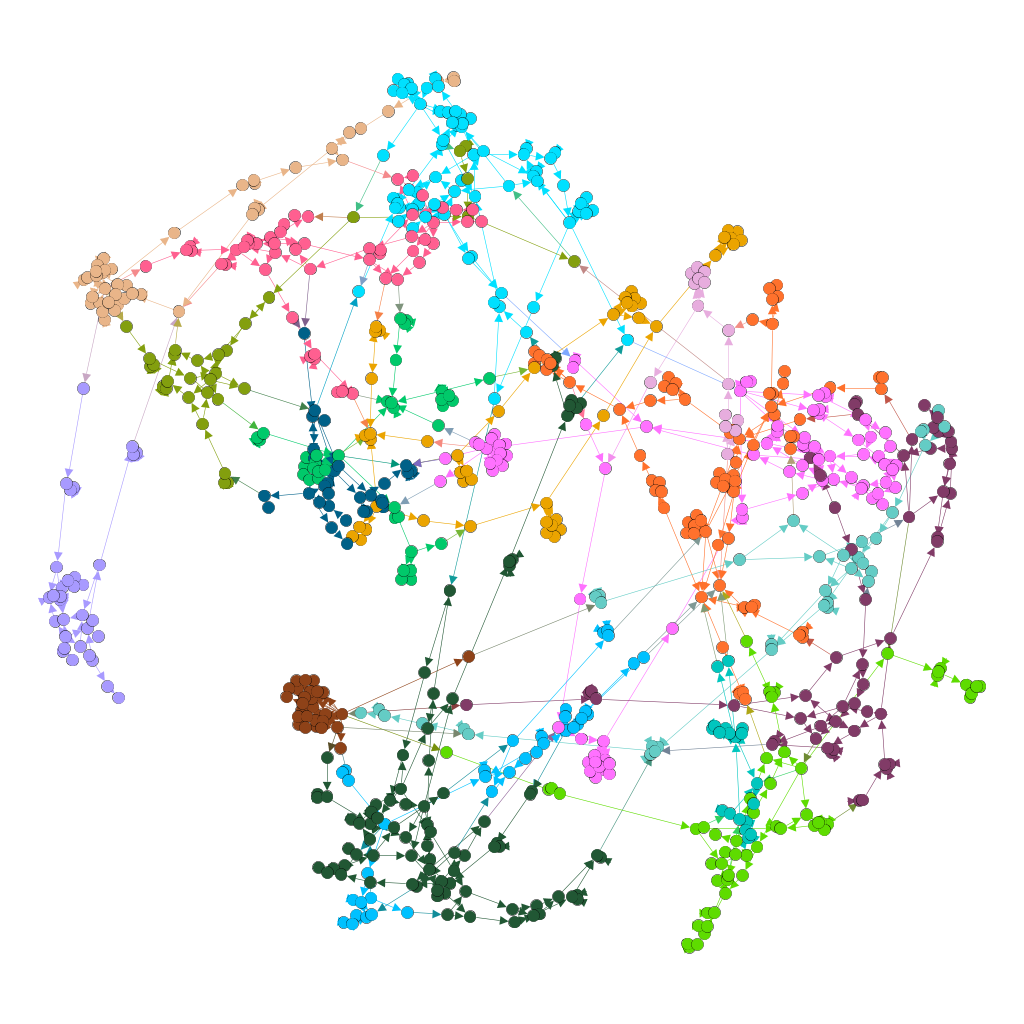}}{6pt}{6pt}
\caption{\label{tech-map}A Boolean network (main figure) corresponding to a real network (embedded figure). In the real network, nodes with inedges all correspond to a function ${\mathbb R}^2 \to {\mathbb R}$ given by $(x_1, x_2) \mapsto \ReLU(w_1 x_1 + w_2 x_2)$ for some -- possibly distinct -- parameter $w$. In the Boolean network, nodes correspond to simple logic functions from ${\mathcal B}_1$ or ${\mathcal B}_2$ produced by a synthesis tool~\cite{Yosys}, implementing a 4-bit fixed-point quantisation of the real network. Tightly interconnected regions can be seen, corresponding to the Boolean implementation of individual arithmetic operations. Rendering of both graphs is via Gephi~\cite{Gephi}, with colouring by `community'.}
\end{center}
\end{figure}

\vspace{1cm}


\noindent {\bf Binarised Neural Networks}

Driven by the desire to reduce energy consumption and improve performance as much as possible, an extreme form of fixed-point arithmetic has been used in so-called binarised neural networks (BNNs)~\cite{DBLP:journals/corr/CourbariauxB16}. In these neural networks, both the weights
and the activation signals are constrained to be drawn from $\{-1,+1\}$, resulting in extremely efficient implementations~\cite{Umuroglu2017}. 
A classical function $f : {\mathbb R}^{n+1} \times {\mathbb R}^n \to {\mathbb R}$ given by $(w, c; x) \mapsto \sigma( w^T x - c )$ implemented by component of a deep-neural network, is aggressively 
quantised to  $\textsc{bnn} : \{-1,+1\}^n \times {\mathbb Z}_n \times \{-1,+1\}^n \to \{-1,+1\}$ given by $(w, c; x) \mapsto +1$ for $w^T x \geq c$, and $(w, c; x) \mapsto -1$ otherwise. 
The key to the implementation efficiency of such functions comes from the near-elimination of hardware-expensive multiplication operations: multiplication in a vector scalar product is reduced to a 
Boolean exclusive (XNOR) function. Meanwhile, the addition in the scalar product is reduced to calculation of Hamming weight (population count), which admits efficient implementations~\cite{Warren2012}. 


Although BNNs have received a lot of attention, the general view in the implementation community is that neural networks constructed in this way are not universally able to implement as good quality classification on complex data sets compared to more precise data representations. This observation has led to manufacturers 
including configurable finite-precision datapaths typically down to 4-bit~\cite{IMGNNA} or 8-bit~\cite{ARMML}. It is instructive to pursue an alternative view, which we shall now develop.

\begin{definition}
A set $\{f_1, \ldots, f_n\}$ of Boolean functions is {\bf functionally complete} if every Boolean function $f$ can be obtained as a finite composition of these functions\cite{Clote2002}.
\end{definition}

Through an appropriate pair of encodings $\varphi_1 : {\mathbb D} \to {\mathbb B}^n$, and $\varphi_2 : {\mathbb B}^m \to {\mathbb E}$, it therefore follows that any function between finite sets $f: {\mathbb D} \to {\mathbb E}$ can be implemented by a Boolean network using a functionally complete set of Boolean functions 
at its vertices, similarly to Fig.~\ref{digital-design}, {\em i.e.} $f = \varphi_2 \circ \llbracket G \rrbracket \circ \varphi_1$ for some Boolean network $G$.

\begin{theorem}
\label{universality}
The set of node functions in a Boolean implementation of Binarized Neural Networks is functionally complete.
\end{theorem}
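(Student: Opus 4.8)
The plan is to prove functional completeness by exhibiting, among the Boolean operations that appear when a BNN neuron is turned into a combinational circuit, a functionally complete subset --- in fact a single gate, NAND, will do. First I would fix the encoding: identify $+1$ with $\top$ and $-1$ with $\bot$, so that for fixed weights $w$ and threshold $c$ the map $\textsc{bnn}(w,c;\,\cdot\,)$ becomes a genuine Boolean function $\mathbb{B}^n \to \mathbb{B}$. At the circuit level each product $w_i x_i$ is realised by an XNOR gate $(w_i,x_i)\mapsto \neg(w_i\oplus x_i)$ --- since $w_i x_i = +1$ exactly when $w_i = x_i$ --- and the scalar product followed by the comparison $w^Tx\geq c$ is realised by a threshold (population-count / majority-style) gate applied to the $n$ XNOR outputs. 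These XNOR and threshold gates, together with the constant wires feeding the comparator with the binary encoding of $c$, are the ``node functions in a Boolean implementation of BNNs''.

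Next I would specialise arity and parameters to extract $\{\neg,\wedge\}$, or directly NAND. Taking $n=1$, weight $w=-1$ and threshold $c=0\in\mathbb{Z}_1$, the neuron computes $x\mapsto\neg x$: indeed the XNOR gate with one input tied to the fixed parameter value $\bot$ is exactly negation, and the degenerate single-input threshold merely passes it through. Taking $n=2$, weights $w=(+1,+1)$ and threshold $c=2\in\mathbb{Z}_2$, the neuron outputs $\top$ precisely when $x_1=x_2=\top$, i.e.\ it computes $x_1\wedge x_2$. Even more economically, with $w=(-1,-1)$ and $c=0\in\mathbb{Z}_2$ the two XNOR gates output $\neg x_1$ and $\neg x_2$, the threshold becomes ``Hamming weight $\geq 1$'', and the neuron computes $\neg x_1 \vee \neg x_2 = \neg(x_1\wedge x_2)$, which is NAND. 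Hence the node functions of a Boolean BNN realise $\neg$ and $\wedge$ (and indeed NAND itself).

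Finally I would invoke the classical fact~\cite{Clote2002} that $\{\neg,\wedge\}$ --- equivalently, the single gate NAND --- is functionally complete, from which it follows that every Boolean function is a finite composition of BNN node functions, proving the theorem. One may additionally remark that the XNOR gate alone, with one input driven by a fixed weight parameter, already supplies negation, so the argument does not even require the threshold gate to do anything non-trivial beyond the two-input disjunction case.

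The main obstacle is not mathematical depth but being precise about what ``node functions in a Boolean implementation'' denotes. I would be careful to justify that fixing $w$ and $c$ is legitimate, because these are \emph{parameters} (determined offline, as in the $\textsc{param}$/$\textsc{in}$ distinction of Section~\ref{DeepLearning}), not activations, so specialising them genuinely produces a Boolean node function of the remaining activation inputs; and that after core generation such a fixed parameter becomes a constant wire, which is exactly the situation in which an XNOR gate degenerates to a NOT. A secondary point to check is that the thresholds used lie in the admissible range (they do: $c=0$ for $n=1$ and $c\in\{0,2\}$ for $n=2$), and that ``finite composition'' in the definition of functional completeness is taken to permit projections of inputs --- under which $\{\neg,\wedge\}$ already generates the Boolean constants, so the claim holds under the weakest reasonable reading of the definition.
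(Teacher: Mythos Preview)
Your proposal is correct and follows essentially the same approach as the paper: fix the $\{-1,+1\}\leftrightarrow\{\bot,\top\}$ encoding, then specialise the BNN parameters $(w,c)$ at small arities to realise a known functionally complete set of connectives. The paper exhibits $\{\wedge,\vee,\neg\}$ via $\textsc{bnn}_{(+1,+1),+2}$, $\textsc{bnn}_{(+1,+1),0}$, and $\textsc{bnn}_{(-1),+1}$, whereas you exhibit $\{\neg,\wedge\}$ (and NAND) via equivalent parameter choices and add some circuit-level gloss about XNOR and threshold gates; these are cosmetic differences, not a different route.
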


\begin{proof}
We shall use the bijection $\phi : \mathbb{B} \to \{-1,+1\}$ defined by $\bot \mapsto -1$, $\top \mapsto +1$.
Clote and Kranakis~\cite{Clote2002} provide necessary and sufficient conditions for a set of Boolean functions to be functionally complete; one well-known such set is $\{\wedge, \vee, \neg\}$, together with the constants $\bot$ and $\top$. The equivalences below can easily be shown through enumeration:

\begin{equation}
\begin{split}
x \wedge y & \Leftrightarrow \phi^{-1} \circ \textsc{bnn}_{(+1,+1),+2}\left( \phi(x), \phi(y) \right) \\
x \vee y  & \Leftrightarrow \phi^{-1} \circ \textsc{bnn}_{(+1,+1),0}\left( \phi(x), \phi(y) \right) \\
\neg x & \Leftrightarrow \phi^{-1} \circ \textsc {bnn}_{(-1),+1}\left( \phi(x) \right) \\
\end{split}
\end{equation}

\end{proof}

Note that it is therefore always possible to construct a real-valued DNN which, when quantised to produce a BNN, implements any Boolean function, including those Boolean functions that would have been derived via traditional design techniques (Fig.~\ref{digital-design}) using any finite-precision datatype $\mathbb{F}$,
{\em i.e.} BNNs easily satisfy our Property~\ding{184}. The theorem therefore challenges received wisdom that binarised neural networks are not always able to produce the required accuracy on a classification task. So why this apparent discrepancy in practice? 
The issue is not with the computational generality of BNNs, but rather with the traditional design technique, which is unable
to adapt the {\em topology} of the network to the requirements of the underlying datatype.

\begin{corollary}
\label{topology-datatype}
Accuracy-optimal network topology depends on finite-precision datatype.
\end{corollary}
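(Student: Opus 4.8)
The plan is to read the corollary as the claim that the assignment sending a finite-precision datatype $\mathbb{F}$ to its collection of accuracy-optimal network topologies is \emph{not constant}, and to establish it by exhibiting a single witnessing instance. First I would fix the notion at the level at which it is meant, namely the Boolean-circuit level of Fig.~\ref{digital-design}: fix an oracle $r$ between finite sets, together with the metric $m$ of Section~\ref{introduction} and encodings $\varphi_1,\varphi_2$; for a datatype $\mathbb{F}$ let the admissible leaf functions be the Boolean functions produced by the core-generation step for $\mathbb{F}$'s arithmetic, and for each directed-graph shape $T$ put $L(\mathbb{F},T)=\inf_p m(\varphi_2\circ\llbracket G\rrbracket_p\circ\varphi_1,\,r)$, the infimum over parameterisations of Boolean networks $G$ whose underlying graph is $T$ and whose vertices carry such leaf functions. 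Call $T$ accuracy-optimal for $\mathbb{F}$ if it attains $\min_T L(\mathbb{F},T)$. The corollary then asks for an $r$ and for $\mathbb{F}_1\neq\mathbb{F}_2$ whose accuracy-optimal shapes differ, and the point to exploit is the one already observed in the text: in the flow of Fig.~\ref{digital-design} the approximation step changes no topology, while the step that does change topology --- the $G_2\to G_3$ core-generation and synthesis --- is rigid with respect to the datatype.

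For the witness I would take $\mathbb{F}_1=\{-1,+1\}$ with the BNN leaf functions, and $r$ a very simple Boolean function, say two-bit conjunction. By Theorem~\ref{universality} the BNN leaf-function set is functionally complete, so $r$ is realised exactly (loss $0$) by a Boolean network over a complete basis, and the accuracy-optimal $\mathbb{F}_1$-topology is the minimum-size such circuit, which here is a small, constant-size circuit. For $\mathbb{F}_2$ I would take a conventional wide datatype, e.g.\ IEEE single-precision floating point with leaf arithmetic $\{*,-,+,/\}$ and $\ReLU$. A finite-precision network computing $r$ exactly does exist --- a single vertex computing an inner product followed by $\ReLU$ suffices on $\{-1,+1\}$-valued inputs --- but, because every nonconstant $\mathbb{F}_2$-vertex expands under core generation to a Boolean cluster honouring the chosen $k$-bit encoding (precisely the phenomenon visible in Fig.~\ref{tech-map}), no $\mathbb{F}_2$-network realising $r$ has a Boolean netlist as small as the $\mathbb{F}_1$-optimum. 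Hence the small circuit is accuracy-optimal for $\mathbb{F}_1$ but not for $\mathbb{F}_2$, so the accuracy-optimal topology genuinely depends on the datatype.

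I expect the main obstacle to be the lower-bound half of the last step: ruling out \emph{every} small $\mathbb{F}_2$-netlist for $r$, rather than merely the one produced by the default flow, is in general a circuit-size lower bound and hence hard. For the deliberately tiny oracle and wide datatype above it degenerates to the elementary fact that a nonconstant floating-point operator expands to more than a constant-depth handful of gates, but for more balanced pairs of datatypes the separation is not soft, and the robust way to keep it soft is to choose the instance so that the obstruction is a preserved structural invariant: for example take $\mathbb{F}_2$ to be a datatype whose arithmetic is monotone (no negation, only $+$ and $\max$) so that all $\mathbb{F}_2$-networks compute monotone-or-antitone Boolean functions, against an oracle such as $x_1\oplus x_2$ that is neither; then no $\mathbb{F}_2$-topology achieves loss $0$ while the BNN does, which forces the optimal topologies apart with no circuit lower bound. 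Two minor points I would also flag: ``topology'' must be taken at the $G_3$ level, since at the finite-precision $G_2$ level one vertex serves many datatypes and there is nothing to prove --- which is exactly why the middle section of Fig.~\ref{digital-design} is the relevant one --- and ``accuracy-optimal topology'' is set-valued, since large topologies can be optimal for both datatypes simultaneously by padding, so the statement is properly about the minimal accuracy-optimal topology, or equivalently that the optimal sets are unequal.
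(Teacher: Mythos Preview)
The paper gives no proof of this corollary at all. It is stated immediately after the functional-completeness theorem, and the surrounding prose makes clear that the intended justification is informal: since BNNs are functionally complete, any accuracy attainable with a richer datatype is attainable with BNNs for \emph{some} topology; the empirical observation that fixed-topology BNNs underperform therefore forces the optimal topology to vary with the datatype. The very next sentence cites the empirical findings that reduced precision can be compensated by increased depth or width, which is the only evidence offered.

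Your proposal is therefore not a variant of the paper's proof but a genuine attempt to supply one where none exists, and it is considerably more careful than the paper. Two remarks. First, a level mismatch: you insist that ``topology'' be read at the $G_3$ Boolean-netlist level, arguing that at the $G_2$ level there is nothing to prove. But the paper's own gloss --- precision traded for depth or width --- is unambiguously about the $G_1$/$G_2$ neural-network architecture, not the post-synthesis netlist. At that level the corollary is not vacuous: the claim is that for a fixed DNN graph shape, the best achievable loss depends on the activation datatype, and hence the shape minimising that loss also depends on it. Your $G_3$-level reading is defensible but is not what the paper has in mind. Second, within your reading, your own diagnosis of the weak point is correct: the float-versus-BNN witness needs a circuit lower bound, and your monotone-datatype fallback (an $\mathbb{F}_2$ whose arithmetic cannot express $\oplus$) is the right way to make the separation unconditional. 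That argument is clean and would stand on its own; it is simply far more than the paper attempts.
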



This corollary leads to a conjecture on future design methods for efficient neural networks, which generalises some empirical observations, {\em e.g.} that reducing precision can be compensated by increasing network depth~\cite{Venkatesh2017} or width~\cite{Su2018}.
Today, digital circuits are universally implemented using CMOS technology~\cite{Weste2010}, whether in a microprocessor or a custom circuit design. CMOS circuits form
extremely efficient implementations of nonlinear operations with a single output bit. This contrasts sharply with the standard nodes of real-valued DNNs, the inner product and the $\ReLU$, which are piecewise linear but arbitrarily precise. The usual approach to this dichotomy is to use wide enough finite-precision
datatypes to make the hardware {\em emulate the real-valued model}: but at what cost?

\begin{conjecture}
Future efficient neural network topologies will be driven by both the topology of the data and by the nature of the discrete representation of the activations. The current separation between approximation (without topological changes) and topological changes (without approximation) will not
survive the drive for efficient computation.
\end{conjecture}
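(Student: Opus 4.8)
The statement is a \emph{conjecture} about the trajectory of design practice rather than a closed mathematical assertion, so any ``proof'' must be a demonstration assembled from partial results inside a fixed cost model rather than a purely deductive argument; the plan is to establish its two sentences separately. For the first sentence --- that future efficient topologies are driven by both the topology of the data and the nature of the discrete representation --- the strategy is to upgrade Corollary~\ref{topology-datatype} from an existence statement to a quantitative dependence. First I would fix a parametric family of data distributions on ${\mathbb X}$ indexed by some notion of intrinsic structure (dimension, margin, a graph of feature correlations), fix a hardware cost model assigning a price to every Boolean network, and then study the map sending a pair (data structure, activation datatype ${\mathbb F}$) to the cheapest topology achieving a target test error. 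The claim to be proven is that this map depends nontrivially on \emph{both} arguments: nontrivial dependence on the datatype is already witnessed by Corollary~\ref{topology-datatype} and by the empirically observed precision-versus-depth and precision-versus-width trade-offs referenced above, while nontrivial dependence on the data structure is exactly the content of the classical approximation-theoretic separations between shallow and deep, or narrow and wide, networks. Composing these two one-variable facts into a genuine two-variable dependence is the first technical step.

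For the second sentence --- that the current separation between approximation and topological change will not survive --- the plan is to formalise the pipeline of Figure~\ref{digital-design} as a \emph{restricted} search: quantisation chooses $\tilde p$ within a fixed topology, and synthesis then rewrites the resulting Boolean network while preserving its function, so the set of implementations reachable by the pipeline is exactly the functionally-equivalent netlists of a single datatype-agnostic topology. Against this I would place the unrestricted optimum over all Boolean networks, which by Theorem~\ref{universality} contains implementations whose topology is tailored to the datatype from the outset, with BNNs as the extreme case. A proof of the second sentence then amounts to exhibiting an explicit family of target oracles and cost budgets for which the unrestricted optimum is asymptotically cheaper than the best output achievable by the pipeline by an unbounded factor --- a genuine separation. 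The LUTNet construction discussed in Section~\ref{TwoApproaches} is intended as concrete, if empirical, evidence that such separations already occur in practice; a fully rigorous version would replace the measured FPGA figures by a provable circuit-complexity lower bound for the pipeline on some explicit function family.

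The main obstacle is the very first move: making ``future efficient topologies'' mathematically meaningful. The conjecture quantifies over an open-ended design space and an evolving cost model --- CMOS today, something else tomorrow --- and so is not falsifiable in its stated form; an honest proof must first commit to a fixed, defensible cost model together with fixed classes of topologies and data distributions. Once that is done, I expect the two-variable dependence of the first sentence to follow unconditionally. The strict separation of the second sentence, however, reduces to a circuit lower bound of the flavour that is notoriously hard --- showing that \emph{no} datatype-agnostic topology can match a datatype-aware one on an explicit function family --- so I expect it to run into the usual barriers to superlinear lower bounds and to be provable in full only for structured sub-classes, such as depth-bounded networks or threshold circuits of bounded weight, where the needed lower bounds are already available.
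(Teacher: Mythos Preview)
The paper does not prove this statement and does not attempt to: it is labelled a \emph{conjecture} and is offered purely as an informed prediction. The only support the paper gives is motivational --- Corollary~\ref{topology-datatype} (optimal topology depends on datatype), the functional-completeness theorem for BNNs, and the cited empirical trade-offs between precision and depth/width. No formal cost model, no separation argument, and no lower bound appears anywhere in the paper in connection with this conjecture.

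Your proposal, by contrast, is an outline of a research programme that would \emph{turn} the conjecture into something provable: fix a cost model, parametrise data distributions and datatypes, study the induced optimal-topology map, and then seek an explicit separation between the restricted pipeline of Figure~\ref{digital-design} and the unrestricted Boolean optimum. That is a sensible and well-structured plan, and your caveats about falsifiability and the difficulty of the required circuit lower bounds are apt. But it goes well beyond anything the paper attempts --- there is simply no ``paper's own proof'' to compare against. The honest summary is that the paper states the conjecture and moves on; you have sketched what a substantiation might look like, correctly flagging that the second sentence would require lower bounds that are currently out of reach except in restricted circuit classes.
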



\section{Boolean Networks for Lipschitz Functions}
\label{WhichFunctions}

Since we have demonstrated the link between topology and data representation in deep neural networks, a natural question arises: which topologies may form good choices for learning Boolean functions? Perhaps one may even remove the ${\mathbb F}$ level of abstraction in Fig.~\ref{digital-design}, which 
would then become equivalent to {\em learning the arithmetic}.

In Section~\ref{DeepLearning}, we discussed the properties of inference functions in a continuous setting; we shall now extend this discussion to Boolean networks. 
The aim of this section is to focus on Property~\ding{182}: how can we develop Boolean networks exhibiting good generalisation?

We explained, following Strang, the centrality of continuity to generalisation in Section~\ref{DeepLearning}. The advantage of working with Lipschitz continuity is that we can directly transfer this idea to the Boolean setting. Here, every function $f : ({\mathbb B}^n,d) \to ({\mathbb B}^m,e)$ is Lipschitz, since we may take the 
Lipschitz constant $k = \max_{(a,b) \in {\mathbb B}^m \times {\mathbb B}^m} (e(a) - e(b))$, so it is not meaningful to talk about continuity in absolute terms, but rather about the value of the Lipschitz constant. We shall therefore study the question {\em which Boolean networks give rise to $k$-Lipschitz functions?} The intuition here is that the lower the Lipchitz constant, the better the function meets the desirable property that small input perturbations cause at most small output perturbations.

Before investigating a concrete example of a simple Boolean circuit in this context, let us consider typical ways to define a metric on the Boolean vectors forming the inputs and outputs of a circuit. It will be helpful to define $\varphi : {\mathbb B} \to \{0,1\}$ as $\bot \mapsto 0$, $\top \mapsto 1$.
Although not strictly necessary, it is typical to consider metrics induced by norms of encoded data, {\em e.g.}~$d(a,b) = || \phi_i(a) - \phi_i(b) ||$ for the domain, where $\phi_i : {\mathbb B}^n \to {\mathbb R}^d$. Here we may interpret $\phi_i$ as denoting a real vector represented by the Boolean inputs. A trivial example would be $\phi_i( a_1, a_0 ) = 2\varphi(a_1) + \varphi(a_0)$, a representation of a two-bit scalar integer in standard binary arithmetic. A more complex scalar encoding corresponds to IEEE single or double-precision floating point, as explicitly given in the standard~\cite{IEEE-standard}.

It is instructive to consider the most basic typical arithmetic circuit, known as a {\em ripple-carry adder}, shown in Fig.~\ref{adder}\cite{Koren2001}. Each leaf node implements a Boolean function known as a {\em full adder}: 
$\textsc{fa}: (a,b,c_i) \mapsto (a \oplus b \oplus c, a \wedge b \vee c_i \wedge (a \vee b))$, where $\oplus$ denotes Boolean XOR.
We can consider this circuit as implementing a function $f : {\mathbb B}^n \times {\mathbb B}^n \times {\mathbb B} \to {\mathbb B}^{n+1}$. If we define $w_k : {\mathbb B}^k \to {\mathbb Z}$ as the function mapping vectors of Boolean
values to the number they represent in a standard binary integer encoding:

\begin{equation}
w_k(x) = \sum_{i=0}^{k-1}{ \varphi(x_i) 2^i },
\end{equation}

\noindent, then it can be seen why the Boolean network is referred to as an {\em adder}: $+ \circ (w_n,w_n,\varphi) = w_{n+1} \circ f$, where $+$ denotes standard integer addition. In the formalism of Fig.~\ref{digital-design}, $\phi_i = (w_n,w_n,\varphi)$, $\phi_o = w_{n+1}$.

\begin{figure}

\begin{center}
\def\maxbit{1}

\begin{subfigure}{0.6\textwidth}
\adjustbox{scale=0.6,center}{
\begin{tikzpicture}[auto, scale=2.0, >=latex']
    \foreach \x in {0,\maxbit} 
      {
      \node [input, name=c\x] (c\x) at (-\x+1.9,0.25) {};
      \node (cend\x) at (-\x+1.35,0.25) {};
      \node [name=a\x] (a\x) at (-\x+0.95,1) {$a_\x$};
      \node (aend\x) at (-\x+0.95,0.45) {};
      \node [name=b\x] (b\x) at (-\x+1.3,1) {$b_\x$};
      \node (bend\x) at (-\x+1.3,0.45) {};
      \draw (-\x+0.9,0) rectangle (-\x+1.4,0.5) {};
      \node (fa\x) at (-\x+1.1,0.25) {\textsc{FA}};
      \draw [->] (a\x) -- (aend\x);
      \draw [->] (b\x) -- (bend\x);
      \draw [->] (c\x) -- node{$c_\x$} (cend\x);
      \node [name=s\x] (s\x) at (-\x+1.125,0.05) {};
      \node (send\x) at (-\x+1.125,-0.5) {$s_\x$};
      \draw [->] (s\x) -- (send\x);
       }
     \node (dot) at (-\maxbit+0.85,0.25) {};
     \node (dotend) at (-\maxbit-0.15,0.25) {};
     \draw [dotted] (dot) -- (dotend);
    \foreach \x in {1,2} 
      {
      \node [input, name=c\x] (c\x) at (\x-3.1,0.25) {};
      \node (cend\x) at (\x-3.65,0.25) {};
      \node [name=a\x] (a\x) at (\x-4.05,1) {$a_{n-\x}$};
      \node (aend\x) at (\x-4.05,0.45) {};
      \node [name=b\x] (b\x) at (\x-3.65,1) {$b_{n-\x}$};
      \node (bend\x) at (\x-3.65,0.45) {};
      \draw (\x-4.1,0) rectangle (\x-3.6,0.5) {};
      \node (fa\x) at (\x-3.9,0.25) {\textsc{FA}};
      \draw [->] (a\x) -- (aend\x);
      \draw [->] (b\x) -- (bend\x);
      \draw [->] (c\x) -- node{$c_{n-\x}$} (cend\x);
      \node [name=s\x] (s\x) at (\x-3.875,0.05) {};
      \node (send\x) at (\x-3.875,-0.5) {$s_{n-\x}$};
      \draw [->] (s\x) -- (send\x);
      }
    \node (cn1) at (-3.05,0.25) {};
    \node (cn2) at (-3.55,0.25) {};
    \node (cn2b) at (-3.5,0.3) {};
    \draw (cn1) -- (cn2);
    \node (cn3) at (-3.5,-0.5) {$c_n$};
    \draw [->] (cn2b) -- (cn3);
\end{tikzpicture}
}
\caption{An $n$-bit adder network.}
\end{subfigure}
\begin{subfigure}{0.35\textwidth}
\adjustbox{scale=0.6,center}{
\begin{tikzpicture}[auto, scale=2.0, >=latex']
    \foreach \x in {0,\maxbit} 
      {
      \node [name=c\x] (c\x) at (-\x+1.9,0.25) {};
      \node (cend\x) at (-\x+1.35,0.25) {};
      \node [name=a\x] (a\x) at (-\x+1.1,1) {$a_\x$};
      \node (aend\x) at (-\x+1.1,0.45) {};
      \draw (-\x+0.9,0) rectangle (-\x+1.4,0.5) {};
      \node (fa\x) at (-\x+1.1,0.25) {\textsc{$f_\x$}};
      \draw [->] (a\x) -- (aend\x);
      \node [name=s\x] (s\x) at (-\x+1.125,0.05) {};
      \node (send\x) at (-\x+1.125,-0.5) {$s_\x$};
      \draw [->] (s\x) -- (send\x);
       }
    \draw [->] (cend1) -- (c1);
    \draw (cend0) -- (c0);
    \node (cn1) at (-0.05,0.25) {};
    \node (cn2) at (-0.55,0.25) {};
    \node (cn2b) at (-0.5,0.2) {};
    \draw [->] (cn2) -- (cn1);
    \node (cn3) at (-0.5,1.0) {$c$};
    \draw (cn3) -- (cn2b);
    \node (q) at (1.84,0.3) {};
    \node (qs) at (1.84,-0.5) {$q$};
    \draw [->] (q) -- (qs);
\end{tikzpicture}
}
\caption{Reversing `carry' direction.}
\end{subfigure}
\end{center}

\caption{\label{adder}Lipschitz properties and network topology.}
\end{figure}
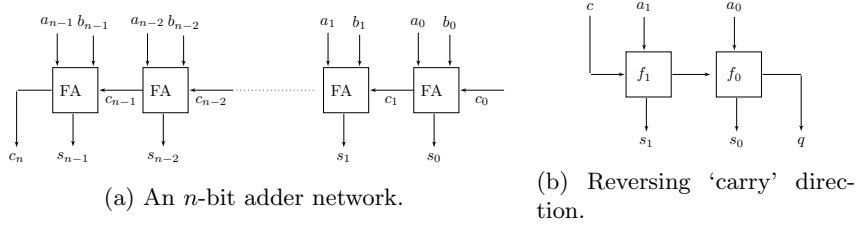

Let us equip the input and output spaces with suitable metrics, {\em e.g.}~those induced by the 1-norm of the difference in their word-level representation:

\begin{equation}
\begin{split}
d( (a, b, c), (a', b', c') ) & = \left| w_n(a) - w_n(a') \right| + \left| w_n(b) - w_n(b') \right| + \left| \varphi(c) - \varphi(c') \right| \\
e( (c, s), (c', s') ) & = \left| w_{n+1}( c, s_{n-1}, \ldots, s_0 ) - w_{n+1}( c', s'_{n-1}, \ldots, s_0 ) \right|
\end{split}
\end{equation}



\begin{lemma}
The function $f : ({\mathbb B}^n \times {\mathbb B}^n \times {\mathbb B}, d) \to ({\mathbb B}^{n+1},e)$ implemented by a ripple-carry adder is $1$-Lipschitz for any $n$.
\end{lemma}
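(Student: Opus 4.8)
The plan is to reduce the Lipschitz bound entirely to the semantic identity already established in the text, namely that the ripple-carry adder computes integer addition through the word-level encodings: $+ \circ (w_n, w_n, \varphi) = w_{n+1} \circ f$. Once we exploit the fact that $f$ \emph{is} addition in this encoded sense, the metrics $d$ and $e$ have been chosen precisely so that the required inequality $e(f(u), f(u')) \le d(u, u')$ collapses to the triangle inequality for the absolute value on $\mathbb{Z}$.

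Concretely, I would fix two inputs $u = (a,b,c)$ and $u' = (a',b',c')$ in $\mathbb{B}^n \times \mathbb{B}^n \times \mathbb{B}$, write $f(u) = (c_n, s)$ and $f(u') = (c_n', s')$, and apply the identity $w_{n+1} \circ f = + \circ (w_n, w_n, \varphi)$ to each, giving $w_{n+1}(c_n, s) = w_n(a) + w_n(b) + \varphi(c)$ and likewise for the primed data. Substituting into the definition of $e$ yields
\[
e(f(u), f(u')) = \bigl| \bigl(w_n(a) + w_n(b) + \varphi(c)\bigr) - \bigl(w_n(a') + w_n(b') + \varphi(c')\bigr) \bigr|.
\]
Regrouping the right-hand side as $\bigl| (w_n(a) - w_n(a')) + (w_n(b) - w_n(b')) + (\varphi(c) - \varphi(c')) \bigr|$ and applying the triangle inequality for $|\cdot|$ on $\mathbb{R}$ bounds it above by $|w_n(a) - w_n(a')| + |w_n(b) - w_n(b')| + |\varphi(c) - \varphi(c')|$, which is exactly $d(u, u')$. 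Hence $e(f(u), f(u')) \le 1 \cdot d(u, u')$, so $f$ is $1$-Lipschitz, and since the constant $1$ does not involve $n$, this holds uniformly in $n$ as claimed.

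There is no genuine computational obstacle; the only point requiring care — and the reason the lemma is worth isolating — is that the conclusion is an artefact of the choice of metric, not of the circuit topology. The argument goes through because both $d$ and $e$ measure distance as the $1$-norm of the difference of standard binary encodings, so that the adder's exact realisation of a $1$-Lipschitz map of normed additive groups transfers directly. Under a different output metric, for instance Hamming distance, the Lipschitz constant would degrade dramatically, since a single-bit change in an input can propagate a carry and flip all $n+1$ output bits. I would close by remarking that the cleanliness of the bound reflects the observation made earlier in the paper: the core-generation step induces no approximation, so $f$ is \emph{exactly} addition, and it is addition — together with the compatible choice of metrics — that supplies the Lipschitz constant.
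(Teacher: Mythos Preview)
Your proof is correct and mirrors the paper's argument exactly: both invoke the semantic identity $w_{n+1}\circ f = +\circ(w_n,w_n,\varphi)$ to rewrite $e(f(u),f(u'))$ as $\lvert (w_n(a)-w_n(a')) + (w_n(b)-w_n(b')) + (\varphi(c)-\varphi(c'))\rvert$ and then apply the triangle inequality to bound it by $d(u,u')$. The paper's version is simply the three-line display without your surrounding commentary on metric dependence.
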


\begin{proof}
\begin{equation}
\label{stdmetric}
\begin{split}
e( (c_n, s), (c'_n, s') ) & = \left| w_{n+1}( c_n, s_{n-1}, \ldots, s_0 ) - w_{n+1}( c'_n, s'_{n-1}, \ldots, s_0 ) \right| \\
				& = \left| w_n(a) - w_n(a') + w_n(b) - w_n(b')  + \varphi(c_0)  - \varphi(c'_0) \right| \\
				& \leq d( (a, b, c_0), (a', b', c'_0) )
\end{split}
\end{equation}
\end{proof}

How does this $1$-Lipschitz property arise? Note that from the {\em topology of the network alone}, we cannot conclude anything useful about the minimal Lipschitz constant of the function implemented; replacing the function of the leaf nodes with an alternative function $(a,b,c) \mapsto (\bot, c)$ results in a minimal Lipschitz
constant of $2^n$ rather than $1$. Changing the metrics -- equivalent in the norm-induced case to encoding the input or output with a different number system -- could equally impact on the Lipschitz properties. Finally, a different network topology based on the same full-adder leaf nodes could clearly 
lead to a different minimal Lipschitz constant. Thus the minimal Lipschitz constant exhibited by a function implemented by a network will generally depend on three things: the topology 
of the network, the leaf-node functionality, and the encoding / metrics associated with the inputs and outputs of the network. Even if we assume the latter to be fixed, the interaction between the former two features is not ideal if we wish to {\em learn} the functionality of nodes in the network: local decisions on Boolean functionality can potentially have global impact on generalisation behaviour of a network. 

Learning from the $n$-bit adder example, one natural approach to generating functions with low Lipschitz constant appears to be to reverse the direction of the `carry' edges $c_i$. If these edges are reversed, then no path exists between $a_i$, $b_i$, $c_i$ and $s_j$ or $c_j$ for
any $j > i$, meaning that changes in low-significance input bits cannot impact high-significance output bits. This topology is also appealing because it corresponds directly to most-significant-digit-first arithmetic, a universal approach to computation pioneered by Ercegovac~\cite{Trivedi1977} 
in the 1970s for computer arithmetic: through a suitable change in the encoding $w_k$, this topology can be utilised to implement all the basic arithmetic operators~\cite{Ercegovac2003}. However, such a topology does not guarantee a particular Lipchitz constant for the metrics defined in (\ref{stdmetric}), because small changes 
in the input metric can still correspond to large changes in the most-significant-digit: one sees this for example with the transition $011111 \to 100000$, a change of one but with a most-significant-digit bit flip. To avoid this issue, one must either change the encoding of the
network inputs and outputs or place restrictions on the Boolean functionality of the nodes.  The former approach -- selecting an optimal encoding of the input space as Booleans -- is an open problem. A trivial but inefficient solution would be to utilise a unary encoding. More efficient solutions could potentially draw deeply from the area of {\em combinatorial Gray codes}~\cite{Savage1997}, {\em i.e.}~methods for generating combinatorial objects 
(such inputs of a discrete-valued neural network, 
drawn from ${\mathbb X}$), so that successive objects differ by a small degree. As noted by Savage~\cite{Savage1997}, Gray codes are not preserved under bijection, and it is exactly this property that could suggest implementation-appropriate coding.

\begin{OpenProblem}
For future deep neural networks, what input and output codings are commensurate with the properties of good inference functions identified in Section~\ref{DeepLearning}, and how do they depend on the input probability space and oracle function?
\end{OpenProblem}

The author performed the following simple experiment to investigate the latter approach, {\em i.e.}~restricting Boolean functionality to ensure a certain Lipschitz constant for fixed topology and metrics. Consider the simple topology shown in Fig.~\ref{adder}(b) with associated metrics 
$d((c,a_1,a_0),(c',a_1',a_0')) = |w_3(c,a_1,a_0) - w_3(c',a_1',a_0')|$ and $e((s_1,s_0,q),(s_1',s_0',q')) = |w_3(s_1,s_0,q) - w_3(s_1,s_0,q')|$. There are $(2^4 \times 2^4)^2$ choices for the Boolean functionality of $(f_1,f_0)$. If we assume that neither constant functions nor those in ${\mathcal B}_1$ are of interest, then there are $100$ choices
for each of $f_1$ and $f_0$. A complete enumeration identifies 376 pairs of Boolean functions $f_1$, $f_0$ for which the network implements a 2-Lipschitz function. One may go further and ask whether we can identify a set of choices for the function of $f_1$ and the function of $f_0$ such
that we may {\em arbitrarily} choose functions from these two sets while maintaining the 2-Lipschitz property, effectively decoupling the choice of leaf functionality from topology. We shall refer to such sets as a `functional decoupling' for given topology, value of $k$, and metrics.

\begin{definition}
Given a network $N$ with enumerated vertices $n_i$, implementing a $k$-Lipschitz function $\llbracket N \rrbracket: ({\mathbb X},d) \to ({\mathbb Y},e)$, a {\bf functional decoupling} is a tuple of sets $S_i$ such that for every vertex, $\textsc{func}_i$ may be replaced by any element of $S_i$
independently, while maintaining the $k$-Lipschitz property.
\end{definition}

Consider a bipartite graph with node set $N_1 \cup N_0$, where $N_1$ is in one-to-one correspondence with the set of choices for function $f_1$ and $N_0$ similarly for function $f_0$, and in
which edges $\{n_1, n_0\}$ correspond to the pairs of functions resulting in a network implementing a 2-Lipschitz function. A biclique~\cite{Bondy1976} of this graph corresponds to a decoupled set. Using the algorithm of Gillis and Gilneur~\cite{Gillis2014} reveals such a biclique of 
size $(6,10)$ for this topology\footnote{Code at: {\tt https://github.com/constantinides/rethinking}}, {\em i.e.} {\em any} combination of these choices of node function results in a 2-Lipchitz network function.  

\begin{OpenProblem}
Given metrics on input and output, a Lipschitz constant $k$, and a network topology, is there a useful characterisation of exactly which functions can be implemented by a network with this topology using only leaf functions drawn from functional decouplings?
\end{OpenProblem}

The significance of this problem is that it would help us to characterise the extent to which it is useful to consider promising network topologies separately from leaf functions.

\section{The Discrete-Continuous Divide: Preliminary Work}
\label{TwoApproaches}

One of today's most promising platforms for practical realisation of very high performance deep neural networks today is the Field-Programmable Gate Array (FPGA)~\cite{Hauck2007}. These architectures provide an interesting case study for exploring some of the ideas presented in this paper, because there is a natural 
choice for the set of leaf functions implemented in a network: the set $\mathcal{B}_K$~\cite{Clote2002} of $K$-input Boolean functions, where $K$ is a device-specific parameter. This is a natural choice because the underlying architecture is actually built of small physical Boolean lookup tables, each programmable to implement
any one of the functions in $\mathcal{B}_K$, together with programmable interconnect able to connect these lookup tables in an effectively arbitrary topology ($K=6$ is common). 

Wang {\em et al.}~\cite{Wang2019b} have recently begun to explore the potential for making use of the additional flexibility provided by these lookup tables. In this initial work -- which we call LUTNet -- we begin by taking a reasonably traditional approach, following~\cite{Ghasemzadeh2018}: some standard DNN benchmarks from the 
literature are quantised to use single-bit weights from $\{-1,+1\}$, and retrained to improve classification accuracy. In the resulting network, many of the vertices have function $(w; x) \mapsto wx$, usually as part of the standard inner product common in DNNs. We observe that such computation is inefficient, because the basic lookup tables
are not being used to their full potential: in the extreme, we have hardware capable of implementing any function from $\mathcal{B}_6$ used solely to implement 2-input XNOR gates. We therefore modify the network in the following way. Firstly, we replace the vertex functions $\{-1,+1\} \times \{-1,+1\} \to \{-1,+1\}$ given by
$(w; x) \mapsto wx$ by the strictly more general class of functions $\mathbb{B}^{2^K} \times \{-1,+1\}^K \to \{-1,+1\}$ consisting of {\em all} functions (isomorphic to) $\mathcal{B}_K$, where the parameter selects the particular function. To make use of the additional support of these functions ($K$ two-valued activations rather than 
just one), we heuristically allocate the additional inputs to connect to other nodes in the network with low values of weight before quantisation. After initially setting the new functions to reproduce the original, {\em i.e.} selecting the parameters from $\mathbb{B}^{2^K}$ to be precisely those regenerating the function $(w; x) \mapsto wx$,
we then retrain the network using standard Stochastic Gradient Descent (SGD) methods. Finally, we simplify the network topology through a standard `pruning' technique~\cite{PRU_CNN_TRAIN_PRUNE_RETRAIN}. The intuition of this process is that the nonlinear generality of the class $\mathcal{B}_K$ may compensate for the pruning, resulting in a higher accuracy for a 
given number of Boolean network nodes. This is indeed what we observe in Fig.~\ref{plot:AREA_LUT_TRADEOFF}, which represents the classification error rate on the test set versus network area (in LUTs) for classification the CIFAR-10~\cite{CIFAR10} dataset containing 60,000 32x32 colour images of 10 different classes, using the CNV neural network model~\cite{Umuroglu2017} as the baseline topology
from which the modifications described above are made to the largest layer - in the case of CNV, this is a sizeable convolutional layer with 256 outputs, operating with 3x3 kernels~\cite{Wang2019b}. For this network, we see a reduction in area consumption of approximately 50\% compared to the baseline implementation operating at the same classification accuracy.

	\begin{figure}
                \centering
\adjustbox{scale=0.6,center}{
               \begin{tikzpicture}

    \begin{axis}[
		width=\columnwidth,
		height=\columnwidth,
		xlabel near ticks,
		xlabel={Area occupancy (LUTs)},
		ylabel near ticks,
		ylabel={Test error rate (\%)},
        xmin=100000,
        xmax=550000,
        error bars/y dir      = both,
        error bars/y explicit = true,
    ]
        \addplot [thick, only marks, mark=x, mark options={scale=1.5, color=red}] table [y=RPerr, x=RPLUT, y error plus=RerrbarU, y error minus=RerrbarL] {err_area.txt}; \label{plt:tradeoff_rebnet}
        \addplot [thick, only marks, mark=o, mark options={scale=1.5, color=black!25!green}] table [y=2Lerr, x=2LLUT, y error plus=2errbarU, y error minus=2errbarL] {err_area.txt}; \label{plt:tradeoff_2lutnet}
        \addplot [thick, only marks, mark=+, mark options={scale=1.5, color=blue}] table [y=4Lerr, x=4LLUT, y error plus=4errbarU, y error minus=4errbarL] {err_area.txt}; \label{plt:tradeoff_4lutnet}
        \addplot [thick, only marks, mark=*, mark options={scale=1.5, color=black!25!cyan}] table [y=6Lerr, x=6LLUT, y error plus=6errbarU, y error minus=6errbarL] {err_area.txt}; \label{plt:tradeoff_6lutnet}
        \addplot [thick, dashed] coordinates {(100000,15.55) (550000,15.55)};
	\end{axis}

\end{tikzpicture}
}
	
            	\caption{
            	    Area-accuracy tradeoff for pruned ReBNet~\cite{Ghasemzadeh2018}~(\ref{plt:tradeoff_rebnet}), 2-LUTNet~(\ref{plt:tradeoff_2lutnet}), 4-LUTNet~(\ref{plt:tradeoff_4lutnet}) and 6-LUTNet~(\ref{plt:tradeoff_6lutnet}) with the CNV network and CIFAR-10 dataset.
            	    Each point is representative of a distinct pruning threshold.
            	    The dashed line shows the baseline accuracy for unpruned ReBNet.
                }
            	\label{plot:AREA_LUT_TRADEOFF}
            \end{figure}
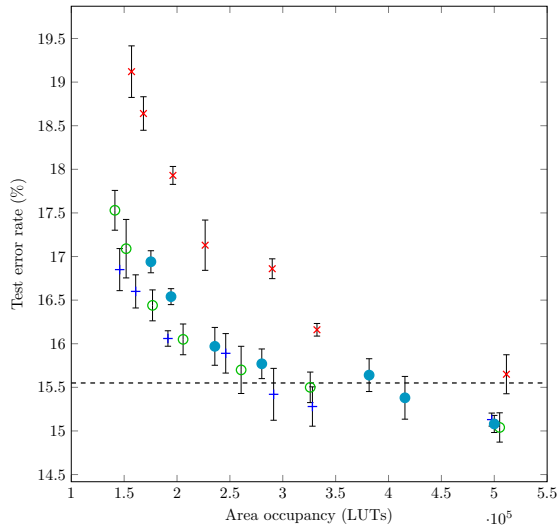

Using SGD in this discrete setting requires a {\em lifting} to a continuous interpolation, as described in detail in~\cite{Wang2019b}.  LUTNet is thus representative of one way direction in which to cross the discrete-continuous divide; some possible approaches to crossing in the opposite direction are explored in Section~\ref{Conclusions}.

\section{Future Directions}
\label{Conclusions}

It is the central thesis of this paper that there is much to learn by viewing neural networks and digital circuits as two embodiments of typed operations on graphs.

The topic of determining a good neural network topology is still in its infancy~\cite{Goodfellow-et-al-2016}. We have shown that there are additional dimensions to this problem: finite-precision
data representation and the metrics determining `closeness' of input and output also have a direct impact on efficient network topologies. Coupling these two concerns would seem to be a significant avenue for fruitful research in deep learning.

While the literature on learning appropriate parameters for predefined neural network topologies has developed rapidly in recent years~\cite{Goodfellow-et-al-2016}, systematic algorithmic approaches to learn neural network topologies from data are only recently appearing~\cite{DBLP:journals/corr/ZophL16}
and the underlying theory is limited. This mirrors the situation
in automated synthesis of digital circuits before the 1990s: the automated synthesis of logic circuits consisting of two layers (one of AND gates with optional input inversion, followed by one of OR gates) had been understood theoretically~\cite{Quine1952} and practically~\cite{Ruddell1987} before the 1990s, 
but only during that decade did the technology to optimise multi-level ('deep') Boolean networks emerge~\cite{Brayton1990}. There may be considerable scope for crossover between the electronic design automation community and the deep learning community based on this work.
Recently, there has been a resurgence of interest in the problem of exact ({\em i.e.}~optimal) 
logic synthesis~\cite{Haaswijk2018}, which -- albeit it in a different setting -- also needs to simultaneously explore topology and node functionality, and is stymied by the resulting computational complexity. This suggests a possible avenue for future development is
to lift the progress being made in this area to richer data types.

The problem of placing bounds on the number of graph nodes drawn from a certain basis set required to meet a given quality of classification, {\em e.g.}~via metric (\ref{idealmetric}), could potentially be a very interesting topic for further theoretical study. There is a rich literature on circuit complexity
bounds~\cite{Clote2002}, and it may be possible to combine these ideas with probabilistic notions from Minimum Length Descriptions~\cite{Rissanen1978} to bound minimal circuit sizes\footnote{The latter interesting suggestion originated from an anonymous reviewer of the original manuscript.}.

The $k$-Lipschitz property used in this paper is a {\em global property}, yet it may seem more natural to consider local properties. Extending the approach to networks that implement some form of locally $k$-Lipschitz functions with high probability, when the input is viewed as a random variable, may
be a fruitful way forward. In addition to reasoning about generalisation behaviour of neural networks, prior work has shown that Lipschitz continuity can play a role in regularisation of neural network models~\cite{Gouk2018}, and that minimal Lipschitz constants are hard to compute~\cite{Scaman2018}
{\em a posteriori}. These results are also suggestive that a holistic approach to topology and node functionality is appropriate, as argued in this paper.

The path seems open to investigate a variety of coding techniques for network inputs and outputs that give rise to desirable properties regarding generalisation as well as efficiency of implementation. In a different context, Dietterich and Bakiri consider distributed output coding for classification~\cite{Dietterich1995},
and it may be the case that coding theory and combinatorial enumeration approaches~\cite{Savage1997} have the potential to shed significant light on the key elements of an efficient inference function discussed in this article. 

In addition to exploring suitable classes of Boolean function, for example by attacking Open Problem 2 described in Section 4, there may be value in generalising nodes to exhibit nondeterministic behaviour. In particular, stochastic rounding has recently appeared as a promising avenue in
both training of deep neural networks~\cite{Gupta2015} and in the simulation of biologically plausible Neural models~\cite{Hopkins2019}.

Finally, we have focused entirely on deep neural networks in this article. There are, of course, many other classical machine learning techniques~\cite{Murphy2012}. We should note that once an inference algorithm for one of these classical methods has been decided upon, the algorithm
can typically be expressed as a network (in the sense of Section~\ref{DeepLearning}) corresponding to the data-flow graph~\cite{Nielson2010} of the algorithm. Just as LUTNet, described in Section~\ref{TwoApproaches}, uses BNNs as a starting point for re-training, it is equally possible to use this
network as a starting point for retraining or topological exploration.

\section*{Acknowledgements}
The author wishes to acknowledge Mr Erwei Wang for his help producing Fig.~\ref{tech-map} and Dr Christos Bouganis for comments on the initial draft and for first interesting me in modern deep learning. One of the anonymous reviewers of the original manuscript made some 
insightful suggestions for future work, and I would like to thank this reviewer for his/her generous suggestions.
This work was financially supported by the Engineering and Physical Sciences Research Council (EP/P010040/1), Imagination Technologies, and the Royal Academy of Engineering.


\bibliographystyle{IEEEtran}
\bibliography{Continuity_rev}

\end{document}